\newtheorem{definition}{Definition}
\newtheorem{lemma}{Lemma}
\newtheorem{proposition}{Proposition}
\newtheorem{corollary}{Corollary}
\newtheorem{theorem}{Theorem}
\newcommand{\lab}[0]{\tau}
\newcommand{\norm}[1]{\left\lVert#1\right\rVert}
\newcommand{\tp}[0]{\top}
\newcommand{\X}{\ensuremath{\mathcal{X}}\xspace}
\newcommand{\V}{\ensuremath{\mathcal{V}}\xspace}
\newcommand{\G}{\ensuremath{\mathcal{G}}\xspace}
\newcommand{\Hilb}{\ensuremath{\mathcal{H}}\xspace}
\newcommand{\bbN}[0]{\ensuremath{\mathbb{N}}\xspace}
\newcommand{\bbR}[0]{\ensuremath{\mathbb{R}}\xspace}
\newcommand{\bbRnn}[0]{\ensuremath{\mathbb{R}_{\geq0}}\xspace}
\newcommand{\Assign}{\ensuremath{\mathfrak{B}}\xspace}
\renewcommand{\@notice}{%
  \enlargethispage{2\baselineskip}%
  \@float{noticebox}[b]%
    \footnotesize\vspace{1em}%
  \end@float%
}
\let\vec\mathbf
\DeclareMathOperator{\img}{img}
\title{On Valid Optimal Assignment Kernels and Applications to Graph Classification}
\author{
  Nils M. Kriege\\
  Department of Computer Science\\
  TU Dortmund, Germany\\
  \texttt{nils.kriege@tu-dortmund.de}
  \And 
  Pierre-Louis Giscard\\
  Department of Computer Science\\
  University of York, UK\\
  \texttt{pierre-louis.giscard@york.ac.uk}
  \And
  Richard C. Wilson\\
  Department of Computer Science\\
  University of York, UK\\
  \texttt{richard.wilson@york.ac.uk}
}
\begin{document}

\maketitle

\begin{abstract}
The success of kernel methods has initiated the design of novel positive 
semidefinite functions, in particular for structured data.
A leading design paradigm for this is the convolution kernel, which decomposes 
structured objects into their parts and sums over all pairs of parts.
Assignment kernels, in contrast, are obtained from an optimal bijection between 
parts, which can provide a more valid notion of similarity.
In general however, optimal assignments yield indefinite functions, which
complicates their use in kernel methods. We characterize a class of base 
kernels used to compare parts that guarantees positive semidefinite optimal assignment kernels.
These base kernels give rise to hierarchies from which the optimal assignment 
kernels are computed in linear time by histogram intersection. We apply these 
results by developing the Weisfeiler-Lehman optimal assignment kernel for graphs. 
It provides high classification accuracy on widely-used benchmark data sets 
improving over the original Weisfeiler-Lehman kernel.
\end{abstract}

\section{Introduction}
The various existing kernel methods can conveniently be applied to any type of 
data, for which a kernel is available that adequately measures the similarity
between any two data objects. 
This includes structured data like images~\cite{Barla2003,Boughorbel2005,Grauman2007},
3d shapes~\cite{Bai2015}, chemical compounds~\cite{Frohlich2005} and 
proteins~\cite{Borgwardt2005a}, which are often represented by graphs. 
Most kernels for structured data decompose both objects and add up the pairwise similarities between
their parts following the seminal concept of convolution kernels proposed by 
Haussler~\cite{Haussler1999}. In fact, many graph kernels can be seen as 
instances of convolution kernels under different decompositions~\cite{Vishwanathan2010}.

A fundamentally different approach with good prospects is to \emph{assign}
the parts of one objects to the parts of the other, such that the total 
similarity between the assigned parts is maximum possible. 
Finding such a bijection is known as \emph{assignment problem} and well-studied
in combinatorial optimization~\cite{Burkard2012}.
This approach has been successfully applied to graph comparison, e.g., in general 
graph matching~\cite{Gori2005,Riesen2009a} as well as in kernel-based
classification~\cite{Frohlich2005,Schiavinato2015,Bai2015}.
In contrast to convolution kernels, assignments establish structural 
correspondences and thereby alleviate the problem of diagonal dominance at the 
same time.
However, the similarities derived in this way are not necessarily positive 
semidefinite (p.s.d.)~\cite{Vert2008,Vishwanathan2010} and hence do not give 
rise to valid kernels, severely limiting their use in kernel methods. 

Our goal in this paper is to consider a particular class of base kernels which 
give rise to valid assignment kernels. In the following we use the term valid to 
mean a kernel which is symmetric and positive semidefinite. 
We formalize the considered problem:
Let $[\X]^n$ denote the set of all $n$-element subsets of a set $\X$ and 
$\Assign(X,Y)$ the set of all bijections between $X,Y$ in $[\X]^n$ for 
$n \in \bbN$. 
We study the \emph{optimal assignment kernel} $K_\Assign^k$ on $[\X]^n$ defined as
\begin{equation}\label{eq:assignment_kernel}
 K_\Assign^k(X,Y) = \max_{B \in \Assign(X,Y)} W(B), \quad\text{ where } W(B) = \sum_{(x,y) \in B} k(x,y)
\end{equation}
and $k$ is a \emph{base kernel} on \X. 
For clarity of presentation we assume $n$ to be fixed. In order to apply the 
kernel to sets of different cardinality, we may fill up the smaller set by new 
objects $z$ with $k(z,x)=0$ for all $x \in \X$ without changing the result.

\paragraph{Related work.}
Correspondence problems have been extensively studied in object recognition, 
where objects are represented by sets of features often called \emph{bag of words}.
Grauman and Darrell proposed the \emph{pyramid match kernel} that seeks to
approximate correspondences between points in $\bbR^d$ by employing a 
space-partitioning tree structure and counting how often points fall into the 
same bin~\cite{Grauman2007}. An adaptive partitioning with non-uniformly shaped 
bins was used to improve the approximation quality in high dimensions~\cite{Grauman2007a}.

For non-vectorial data, Fröhlich et al.~\cite{Frohlich2005} proposed kernels for 
graphs derived from an optimal assignment between their vertices and applied the 
approach to molecular graphs. However, it was shown that the resulting similarity
measure is not necessarily a valid kernel~\cite{Vert2008}.
Therefore, Vishwanathan et al.~\cite{Vishwanathan2010} proposed a theoretically 
well-founded variation of the kernel, which essentially replaces the 
$\max$-function in Eq.~\eqref{eq:assignment_kernel} by a soft-max function.
Besides introducing an additional parameter, which must be chosen carefully to 
avoid numerical difficulties, the approach requires the evaluation of a sum over all 
possible assignments instead of finding a single optimal one. 
This leads to an increase in running time from cubic to factorial, which is 
infeasible in practice.
Pachauri et al.~\cite{Pachauri2013} considered the problem of finding optimal assignments between 
multiple sets. The problem is equivalent to finding a permutation of the elements 
of every set, such that assigning the $i$-th elements to each other yields an 
optimal result.
Solving this problem allows the derivation of valid kernels between pairs of sets with a
fixed ordering. This approach was referred to as \emph{transitive assignment kernel}
in~\cite{Schiavinato2015} and employed for graph classification.
However, this does not only lead to non-optimal assignments between individual 
pairs of graphs, but also suffers from high computational costs.
Johansson and Dubhashi~\cite{Johansson2015} derived kernels from 
optimal assignments by first sampling a fixed set of so-called \emph{landmarks}. 
Each data point is then represented by a feature vector, where each component is 
the optimal assignment similarity to a landmark.

Various general approaches to cope with indefinite kernels have been proposed, 
in particular, for support vector machines (see \cite{Loosli2015} and references therein).
Such approaches should principally be used in applications, where similarities 
cannot be expressed by positive semidefinite kernels.

\paragraph{Our contribution.}
We study optimal assignment kernels in more detail and investigate which base 
kernels lead to valid optimal assignment kernels. We characterize a specific 
class of kernels we refer to as \emph{strong} and show that strong kernels are 
equivalent to kernels obtained from a hierarchical partition of the domain 
of the kernel. We show that for strong base kernels the optimal assignment 
(i) yields a valid kernel; and (ii) can be computed in linear time given the
associated hierarchy.
While the computation reduces to histogram intersection similar to the
pyramid match kernel~\cite{Grauman2007}, our approach is in no way restricted 
to specific objects like points in $\bbR^d$. We demonstrate the versatility
of our results by deriving novel graph kernels based on optimal assignments, 
which are shown to improve over their convolution-based counterparts.
In particular, we propose the Weisfeiler-Lehman optimal assignment kernel, which
performs favourable compared to state-of-the-art graph kernels on a wide range 
of data sets.

\section{Preliminaries}
Before continuing with our contribution, we begin by introducing some key notation
for kernels and trees which will be used later.
A \emph{(valid) kernel} on a set \X is a function 
$k : \X \times \X \to \mathbb{R}$ such that there is a real 
Hilbert space $\Hilb$ and a mapping $\phi : \X \to \Hilb$ 
such that $k(x,y) = \langle \phi(x), \phi(y) \rangle$ for all $x,y$ in \X, 
where $\langle \cdot, \cdot \rangle$ denotes the inner product of $\Hilb$. 
We call $\phi$ a \emph{feature map}, and $\Hilb$ a \emph{feature space}. 
Equivalently, a function $k : \X \times \X \to \mathbb{R}$ is a kernel
if and only if for every subset $\{x_1,\dots,x_n\} \subseteq \X$ the 
$n\times n$ matrix defined by $[m]_{i,j} = k(x_i,x_j)$ is p.s.d.
The Dirac kernel $k_\delta$ is defined by $k_\delta(x,y) = 1$, if $x=y$ and $0$ 
otherwise.

We consider simple undirected graphs $G=(V,E)$, where $V(G) = V$ is the set of 
\emph{vertices} and $E(G)=E$ the set of \emph{edges}. An edge $\{u,v\}$ is for
short denoted by $uv$ or $vu$, where both refer to the same edge.
A graph with a unique path between any two vertices is a \emph{tree}.
A \emph{rooted tree} is a tree $T$ with a distinguished vertex $r \in V(T)$ 
called \emph{root}.
The vertex following $v$ on the path to the root $r$ is called \emph{parent} of 
$v$ and denoted by $p(v)$, where $p(r) = r$.
The vertices on this path are called \emph{ancestors} of $v$ and the \emph{depth} 
of $v$ is the number of edges on the path.
The \emph{lowest common ancestor} ${\rm LCA}(u,v)$ of two vertices $u$ and $v$ 
in a rooted tree is the unique vertex with maximum depth that is an ancestor of
both $u$ and $v$.

\section{Strong kernels and hierarchies}
In this section we introduce a restricted class of kernels that will later turn 
out to lead to valid optimal assignment kernels when employed as base kernel.
We provide two different characterizations of this class, one in terms of an
inequality constraint on the kernel values, and the other by means of a hierarchy
defined on the domain of the kernel. The latter will provide the basis for our 
algorithm to compute valid optimal assignment kernels efficiently.

We first consider similarity functions fulfilling the requirement that for any 
two objects there is no third object that is more similar to each of them than 
the two to each other.
We will see later in Section~\ref{sec:strong:feature_maps} that every such 
function indeed is p.s.d. and hence a valid kernel.
\begin{definition}[Strong Kernel]\label{def:strong}
 A function 
 $k : \X \times \X \to \bbRnn$ is called \emph{strong kernel} 
 if $k(x,y) \geq \min\{k(x,z), k(z,y)\}$ for all $x,y,z \in \X$.
\end{definition}
Note that a strong kernel requires that every object is most similar to itself, 
i.e.,  $k(x,x) \geq k(x,y)$ for all $x,y \in \X$.

In the following we introduce a restricted class of kernels that is derived
from a hierarchy on the set \X. As we will see later in 
Theorem~\ref{thm:strong_hierarchy} this class of kernels is equivalent to 
strong kernels according to Definition~\ref{def:strong}.
Such hierarchies can be systematically constructed on sets of arbitrary objects 
in order to derive strong kernels.
We commence by fixing the concept of a hierarchy formally.
Let $T$ be a rooted tree such that the leaves of $T$ are the elements of 
\X.
Each inner vertex $v$ in $T$ corresponds to a subset of \X comprising 
all leaves of the subtree rooted at $v$. Therefore the tree $T$ defines a 
family of nested subsets of \X. 
Let $w : V(T) \to \bbRnn$ be a weight function such that $w(v) \geq w(p(v))$ for
all $v$ in $T$. We refer to the tuple $(T,w)$ as a \emph{hierarchy}. 

\begin{definition}[Hierarchy-induced Kernel]\label{def:h_induced}
 Let $H=(T,w)$ be a hierarchy on \X, then the function defined as
 $k(x,y) = w({\rm LCA}(x,y))$ for all $x,y$ in \X is the 
 kernel on \X \emph{induced} by $H$. 
\end{definition}

We show that Definitions~\ref{def:strong} and~\ref{def:h_induced} characterize 
the same class of kernels.

\begin{lemma}\label{prop:strong_hierarchy:left}
 Every kernel on \X that is induced by a hierarchy on \X is strong.
\end{lemma}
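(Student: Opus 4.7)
The plan is to reduce the strong kernel inequality to a simple combinatorial fact about triples of leaves in a rooted tree, together with the monotonicity condition $w(v) \geq w(p(v))$ built into a hierarchy. Fix arbitrary $x,y,z \in \X$ and consider the three pairwise lowest common ancestors $a = {\rm LCA}(x,y)$, $b = {\rm LCA}(x,z)$, and $c = {\rm LCA}(y,z)$ in $T$. By Definition~\ref{def:h_induced}, the three kernel values are exactly $w(a)$, $w(b)$, $w(c)$, so the goal becomes $w(a) \geq \min\{w(b), w(c)\}$.

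The key structural claim I would establish first is: among $\{a,b,c\}$, two coincide and the third is a descendant of (or equal to) them. To see this, let $r$ be the lowest common ancestor of $\{x,y,z\}$; it is a common ancestor of $a$, $b$, and $c$. Exactly one of the three pairs of leaves lies in a single child subtree of $r$. For example, if $x$ and $y$ lie in a common child subtree of $r$, then $z$ lies in a different child subtree, which forces $b = c = r$ while $a$ is a (not necessarily strict) descendant of $r$; the other two configurations are symmetric.

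Next I would invoke the monotonicity condition on $w$. Since $w(v) \geq w(p(v))$ for every $v$, an easy induction on path length gives $w(u) \geq w(v)$ whenever $u$ is a descendant of $v$. Combined with the structural claim, this yields that among $w(a), w(b), w(c)$ two are equal and the third is at least as large.

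A short case distinction then closes the proof. If $w(a)$ is the larger value, then $w(a) \geq w(b) = w(c) \geq \min\{w(b),w(c)\}$. Otherwise $w(a)$ equals one of $w(b)$ or $w(c)$ (the two smaller ones), so $w(a) \geq \min\{w(b),w(c)\}$ trivially. Translating back via Definition~\ref{def:h_induced}, $k(x,y) \geq \min\{k(x,z), k(z,y)\}$, which is exactly the strong kernel condition. The only nontrivial step is the LCA claim about triples; everything else is immediate from the monotonicity of $w$.
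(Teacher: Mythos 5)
Your proof is correct, but it runs in the opposite direction from the paper's. The paper argues by contraposition: assuming $k(x,y) < \min\{k(x,z),k(z,y)\}$, the weight monotonicity forces $c={\rm LCA}(x,y)$ to be a proper ancestor of both $a={\rm LCA}(x,z)$ and $b={\rm LCA}(z,y)$, which produces two distinct paths from $z$ to $c$ and contradicts $T$ being a tree. You instead prove the inequality directly from the standard fact that among the three pairwise LCAs of $x,y,z$ two coincide and the third is a descendant of them, plus the observation that descendants carry larger weights. Both arguments ultimately rest on the same tree-structural fact; yours makes it explicit as a reusable lemma about LCA triples and avoids reasoning about path uniqueness, while the paper's contradiction argument is shorter because it only needs to rule out one bad configuration. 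One small inaccuracy in your write-up: the assertion that \emph{exactly} one of the three pairs lies in a single child subtree of $r={\rm LCA}(\{x,y,z\})$ fails when $x,y,z$ fall into three distinct child subtrees (then no pair does, and $a=b=c=r$), and also needs care when two of the three elements coincide. Neither case harms you—your final structural claim ``two coincide and the third is a descendant or equal'' still holds, and the inequality is trivially an equality there—but the claim should be stated as ``at most one pair.''
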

\begin{proof}
Assume there is a hierarchy $(T,w)$ that induces a kernel $k$ that is not strong.
Then there are $x,y,z \in \X$ with $k(x,y) < \min\{k(x,z), k(z,y)\}$ and three 
vertices $a={\rm LCA}(x,z)$, $b={\rm LCA}(z,y)$ and $c={\rm LCA}(x,y)$ with 
$w(c) < w(a)$ and $w(c) < w(b)$. 
The unique path from $x$ to the root contains $a$ and the path from $y$ to the 
root contains $b$, both paths contain $c$.
Since weights decrease along paths, the assumption implies that $a,b,c$ are 
pairwise distinct and $c$ is an ancestor of $a$ and $b$. 
Thus, there must be a path from $z$ via $a$ to $c$ and another path from $z$ 
via $b$ to $c$.
Hence, $T$ is not a tree, contradicting the assumption.
\end{proof}

We show constructively that the converse holds as well.
\begin{lemma}\label{prop:strong_hierarchy:right}
 For every strong kernel $k$ on \X there is a hierarchy on \X that induces $k$.
\end{lemma}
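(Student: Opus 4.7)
The plan is to build the hierarchy explicitly from level sets of $k$. For $x \in \X$ and $t \in \bbRnn$, define the ``ball'' $[x]_t = \{y \in \X : k(x,y) \geq t\}$, and let $\mathcal{F} = \{[x]_t : x \in \X,\ t \in \bbRnn\}$. The hope is that $\mathcal{F}$ is laminar (any two members are disjoint or nested) and hence is the node set of a rooted tree, which, once we attach appropriate weights, is the desired hierarchy.

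First I would establish two closure lemmas using strongness. (i) If $k(x,y) \geq t$, then $[x]_t = [y]_t$: for any $z \in [x]_t$, strongness gives $k(z,y) \geq \min\{k(z,x), k(x,y)\} \geq t$, so $z \in [y]_t$, and symmetrically. (ii) If $A = [x]_s$ and $B = [y]_t$ share a point $z$ and $s \leq t$, then $B \subseteq A$: for any $w \in B$, we have $k(y,w) \geq t$, hence by strongness applied twice, $k(z,w) \geq \min\{k(z,y), k(y,w)\} \geq t$ and $k(x,w) \geq \min\{k(x,z), k(z,w)\} \geq s$. So $\mathcal{F}$ is laminar.

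Next I would turn $\mathcal{F}$, augmented (if necessary) with the singletons $\{x\}$ and the full set $\X$ as the root, into a rooted tree $T$ by declaring $A$ to be the parent of $B$ whenever $A$ is the minimal strict superset of $B$ in this family. The leaves are exactly the singletons $\{x\}$. I would then define
\[
w(V) = \min_{u,v \in V} k(u,v) \quad \text{for } V \in \mathcal{F}, \qquad w(\{x\}) = k(x,x).
\]
Monotonicity $w(v) \geq w(p(v))$ is immediate because a $\min$ over a larger set can only decrease; in the special case where a singleton $\{x\}$ was appended with parent $V \in \mathcal{F}$, it reduces to $k(x,x) \geq k(x,u)$ for $u \in V$, which follows from the strong-kernel property. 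Thus $(T,w)$ is a valid hierarchy.

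The final step is to verify that $(T,w)$ induces $k$. For $x = y$, $\mathrm{LCA}(x,x) = \{x\}$ and $w(\{x\}) = k(x,x)$. For $x \neq y$, set $V = [x]_{k(x,y)} = [y]_{k(x,y)}$ (equal by (i)); this is the smallest member of $\mathcal{F}$ containing both, hence equals $\mathrm{LCA}(\{x\},\{y\})$ in $T$. Clearly $k(x,y) \geq w(V)$. For the reverse inequality, any $u,v \in V$ satisfy $k(x,u) \geq k(x,y)$ and $k(x,v) \geq k(x,y)$, so by strongness $k(u,v) \geq \min\{k(u,x), k(x,v)\} \geq k(x,y)$; taking the minimum over $u,v$ gives $w(V) \geq k(x,y)$, so $w(\mathrm{LCA}(x,y)) = k(x,y)$.

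The main obstacle I expect is the bookkeeping around ties and the appearance of singletons: when $k(x,x) = k(x,y)$ for some $y \neq x$, the set $\{x\}$ is not itself of the form $[z]_t$ and must be appended as a leaf by hand. Checking that this append never violates laminarity or the weight monotonicity is the only subtle point, and it is handled by the strong-kernel inequality $k(x,x) \geq k(x,u)$ for all $u$ in the minimal member of $\mathcal{F}$ containing $x$.
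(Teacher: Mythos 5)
Your proof is correct, and it takes a genuinely different route from the paper. The paper builds the hierarchy \emph{incrementally}: elements of \X are inserted one at a time, and each new leaf $z$ is grafted above the vertex $b$ whose subtree contains exactly the already-inserted elements maximizing $k(\cdot,z)$; the strong-kernel inequality is invoked to show this grafting preserves the induced kernel on the elements outside that subtree. You instead give a \emph{global, order-independent} construction: the family of level sets $[x]_t=\{y: k(x,y)\geq t\}$ is shown to be laminar via two applications of the strong-kernel inequality, the tree is the containment order on this family (augmented by singleton leaves), and the weight of a node is the minimum pairwise kernel value inside it. Your verification that the LCA of $x$ and $y$ is the smallest ball containing both, with weight exactly $k(x,y)$, is sound, and the two delicate points you flag --- singletons that are not themselves level sets, and the monotonicity $w(v)\geq w(p(v))$ at appended leaves --- are both handled correctly (the latter even follows trivially since the pair $(x,x)$ participates in the parent's minimum). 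What each approach buys: yours makes the analogy with the classical ball-tree construction for ultrametrics explicit and yields a canonical hierarchy independent of any insertion order, which is conceptually cleaner; the paper's insertion argument is closer to an online algorithm for actually building the hierarchy and directly exhibits the ``unique way to extend'' it, which the authors use to motivate the near-uniqueness remark that follows. Both arguments implicitly assume \X is finite (so that minimal balls and minimal strict supersets exist); you may wish to state that, and to note that empty level sets are discarded, but these are cosmetic.
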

\begin{figure}
  \centering
  \null\hfill
  \subfigure[$H_i$]{\label{fig:hierarchy_proof:old}
    \includegraphics{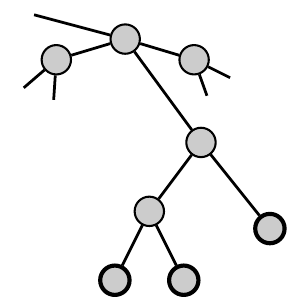}
  }\hfill
  \subfigure[$H_{i+1}$ for $\mathcal{B}=\{b_1,b_2,b_3\}$]{\label{fig:hierarchy_proof:new_1}
    \includegraphics{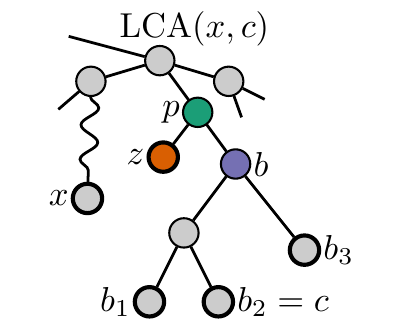}
  }\hfill
  \subfigure[$H_{i+1}$ for $|\mathcal{B}|=1$]{\label{fig:hierarchy_proof:new_2}
    \includegraphics{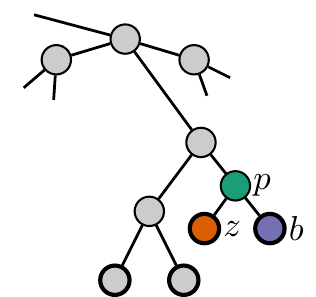}
  }
  \hfill\null
  \caption{Illustrative example for the construction of the hierarchy on $i+1$ objects~\subref{fig:hierarchy_proof:new_1},~\subref{fig:hierarchy_proof:new_2} from the hierarchy on $i$ objects~\subref{fig:hierarchy_proof:old} following the procedure used in the proof of Lemma~\ref{prop:strong_hierarchy:right}. The inserted leaf $z$ is highlighted in
  red, its parent $p$ with weight $w(p) = k_{\max}$ in green and $b$ in blue, respectively.}
  \label{fig:hierarchy_proof}
\end{figure}
\begin{proof}[Proof (Sketch)]
 We incrementally construct a hierarchy on \X that induces $k$ by successive insertion
 of elements from \X. In each step the hierarchy induces $k$ restricted to the inserted 
 elements and eventually induces $k$ after insertion of all elements.
 Initially, we start with a hierarchy containing just one element $x \in \X$  with
 $w(x) = k(x,x)$.
 The key to all following steps is that there is a unique way to extend the 
 hierarchy:
 Let $\X_{i} \subseteq \X$ be the first $i$ elements in the order of insertion and 
 let $H_i=(T_i,w_i)$ be the hierarchy after the $i$-th step. A leaf 
 representing the next element $z$ can be grafted onto $H_i$ to form a hierarchy 
 $H_{i+1}$ that induces $k$ restricted to $\X_{i+1}= \X_i \cup \{z\}$. 
 Let $\mathcal{B}=\{x\in\X_{i}:\,k(x,z)=k_{\text{max}}\}$, where 
 $k_{\text{max}}=\max_{y\in \X_{i}} k(y,z)$.
 There is a unique vertex $b$, such that $\mathcal{B}$ are the leaves of the subtree 
 rooted at $b$, cf. Fig.~\ref{fig:hierarchy_proof}.
 We obtain $H_{i+1}$ by inserting a new vertex $p$ with child $z$ into $T_i$, such that 
 $p$ becomes the parent of $b$, cf. Fig.~\ref{fig:hierarchy_proof:new_1},~\subref{fig:hierarchy_proof:new_2}.
 We set $w_{i+1}(p)=k_{\max}$, $w_{i+1}(z)=k(z,z)$ and $w_{i+1}(x)=w_{i}(x)$ for all $x \in V(T_i)$.
 Let $k'$ be the kernel induced by $H_{i+1}$. Clearly, $k'(x,y)=k(x,y)$ for all $x,y \in \X_i$.
 According to the construction $k'(z,x) = k_{\text{max}} = k(z,x)$ for all $x \in \mathcal{B}$.
 For all $x \notin \mathcal{B}$ we have ${\rm LCA}(z,x) = {\rm LCA}(c,x)$ for any 
 $c \in \mathcal{B}$, see Fig.~\ref{fig:hierarchy_proof:new_1}.
 For strong kernels $k(x,c)\geq \min\{k(x,z),k(z,c)\}= k(x,z)$ and $k(x,z)\geq \min\{k(x,c),k(c,z)\}=k(x,c)$,
 since $k(c,z)=k_{\text{max}}$. Thus $k(z,x)=k(c,x)$ must hold and consequently $k'(z,x) = k(z,x)$.
\end{proof}

Note that a hierarchy inducing a specific strong kernel is not unique: 
Adjacent inner vertices with the same weight can be merged, and vertices with 
just one child can be removed without changing the induced kernel. 
Combining Lemmas~\ref{prop:strong_hierarchy:left} and~\ref{prop:strong_hierarchy:right}
we obtain the following result.
\begin{theorem}\label{thm:strong_hierarchy}
 A kernel $k$ on \X is strong if and only if it is induced by a hierarchy on \X.
\end{theorem}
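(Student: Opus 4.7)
The plan is to observe that the theorem is just the conjunction of the two lemmas that immediately precede it, so nothing new needs to be established. Specifically, I would prove the biconditional by treating each direction separately and invoking the appropriate lemma.

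For the ``if'' direction---that any kernel induced by a hierarchy is strong---I would simply cite Lemma~\ref{prop:strong_hierarchy:left}, whose proof argues that a failure of the strong inequality at some triple $x,y,z$ forces the alleged tree to contain two distinct $z$-to-${\rm LCA}(x,y)$ paths, contradicting that the underlying structure is a tree. For the ``only if'' direction---that every strong kernel arises from some hierarchy---I would invoke Lemma~\ref{prop:strong_hierarchy:right}, which gives an explicit inductive construction: insert the elements of $\X$ one at a time, and at each step attach the new leaf $z$ above the unique subtree whose leaves $\mathcal{B}$ achieve $k_{\max} = \max_{y\in \X_i} k(y,z)$, assigning the new inner vertex weight $k_{\max}$.

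There is no real obstacle here, since all of the work has already been absorbed into the two lemmas. The one small thing worth flagging in the write-up is the remark (already made in the paragraph preceding the theorem) that the hierarchy produced by Lemma~\ref{prop:strong_hierarchy:right} is not unique, because adjacent inner vertices of equal weight may be merged and unary inner vertices may be contracted without affecting the induced kernel. Consequently the statement of Theorem~\ref{thm:strong_hierarchy} should be read as asserting existence of a hierarchy on each side of the equivalence, not uniqueness, and the two-line proof amounts to chaining the two lemmas together.
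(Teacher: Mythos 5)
Your proposal is correct and matches the paper exactly: the paper itself derives Theorem~\ref{thm:strong_hierarchy} by simply combining Lemma~\ref{prop:strong_hierarchy:left} (hierarchy-induced implies strong) with Lemma~\ref{prop:strong_hierarchy:right} (strong implies hierarchy-induced). Your additional remark on non-uniqueness of the inducing hierarchy is also made in the paper and does not affect the argument.
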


As a consequence of the above theorem the number of values a strong kernel 
on $n$ objects may take is bounded by the number of vertices in a binary tree 
with $n$ leaves, i.e., for every strong kernel $k$ on \X we have $|\img(k)| \leq 2|\X|-1$.
The Dirac kernel is a common example of a strong kernel, in fact, every kernel 
$k : \X \times \X \to \bbRnn$ with $|\img(k)|=2$ is strong.

The definition of a strong kernel and its relation to hierarchies is reminiscent 
of related concepts for distances: A metric $d$ on \X is an \emph{ultrametric}
if $d(x,y)\leq \max\{d(x,z),d(z,y)\}$ for all $x,y,z\in \X$. 
For every ultrametric $d$ on \X there is a rooted tree $T$ with leaves \X and 
edge weights, such that 
\begin{inparaenum}[(i)]
 \item $d$ is the path length between leaves in $T$,
 \item the path lengths from a leaf to the root are all equal.
\end{inparaenum}
Indeed, every ultrametric can be embedded into a Hilbert space~\cite{Ismagilov1997}
and thus the associated inner product is a valid kernel. 
Moreover, it can be shown that this inner product always is a strong kernel.
However, the concept of strong kernels is more general: there are strong kernels 
$k$ such that the associated kernel metric $d_k(x,y) = \norm{\phi(x)-\phi(y)}$
is not an ultrametric.
The distinction originates from the self-similarities, which in strong kernels, 
can be arbitrary provided that they fulfil $k(x,x)\geq k(x,y)$
for all $x,y$ in \X. 
This degree of freedom is lost when considering distances. 
If we require all self-similarities of a strong kernel to be equal, then the 
associated kernel metric always is an ultrametric.
Consequently, strong kernels correspond to a superset of ultrametrics.
We explicitly define a feature space for general strong kernels in the following.

\subsection{Feature maps of strong kernels}\label{sec:strong:feature_maps}

We use the property that every strong kernel is induced by a hierarchy to derive
feature vectors for strong kernels.
Let $(T,w)$ be a hierarchy on \X that induces the strong kernel $k$. We define 
the additive weight function $\omega : V(T) \to \bbRnn$ as
$\omega(v) = w(v)-w(p(v))$ and $\omega(r) = w(r)$ for the root $r$. 
Note that the property of a hierarchy assures that the difference is non-negative.
For $v \in V(T)$ let $P(v) \subseteq V(T)$ denote the vertices in $T$ on
the path from $v$ to the root $r$.

We consider the mapping $\phi : \X \to \bbR^t$, where $t = |V(T)|$ and
the components indexed by $v \in V(T)$ are
\begin{equation*}
 [\phi(x)]_v = 
 \begin{cases}
   \sqrt{\omega(v)},  & \text{if } v \in P(x) \\
   0,                 & \text{otherwise.}
 \end{cases}
\end{equation*}
\begin{proposition}\label{prop:strong:feature_map}
 Let $k$ be a strong kernel on \X. The function $\phi$ defined as 
 above is a feature map of $k$, i.e., $k(x,y) = \phi(x)^\tp \phi(y)$ for all 
 $x,y \in \X$.
\end{proposition}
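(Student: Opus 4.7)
The plan is to expand the inner product $\phi(x)^\tp \phi(y)$ in components and reduce it to $w(\mathrm{LCA}(x,y))$, which by Definition~\ref{def:h_induced} is exactly $k(x,y)$. First, I would write out
\[
 \phi(x)^\tp \phi(y) = \sum_{v \in V(T)} [\phi(x)]_v \, [\phi(y)]_v = \sum_{v \in P(x) \cap P(y)} \omega(v),
\]
since the only components on which both feature vectors are nonzero are those indexed by vertices lying on both root-paths, and on each such vertex the product of entries is $\sqrt{\omega(v)} \cdot \sqrt{\omega(v)} = \omega(v)$ (well-defined because $\omega$ is nonnegative by the monotonicity property of a hierarchy).

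Next I would identify $P(x) \cap P(y)$ with $P(\mathrm{LCA}(x,y))$. Because $T$ is a rooted tree, the set of common ancestors of $x$ and $y$ is precisely the root-path starting at their lowest common ancestor; this is immediate from the definition of $\mathrm{LCA}$ and the uniqueness of paths in a tree. Hence the sum above equals $\sum_{v \in P(\mathrm{LCA}(x,y))} \omega(v)$.

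The final step is a telescoping argument. For any vertex $u$, let $u = v_0, v_1 = p(v_0), \dots, v_m = r$ enumerate $P(u)$. Then
\[
 \sum_{v \in P(u)} \omega(v) = \omega(r) + \sum_{i=0}^{m-1}\bigl(w(v_i) - w(v_{i+1})\bigr) = w(r) + w(v_0) - w(v_m) = w(u).
\]
Applying this with $u = \mathrm{LCA}(x,y)$ gives $\phi(x)^\tp \phi(y) = w(\mathrm{LCA}(x,y)) = k(x,y)$, as desired.

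I do not anticipate a genuine obstacle: the argument is a routine bookkeeping computation, and the only place care is needed is to verify that $P(x) \cap P(y) = P(\mathrm{LCA}(x,y))$ rather than merely containing it, and that the telescoping sum correctly accounts for the root by using the convention $\omega(r) = w(r)$ instead of $w(r) - w(p(r)) = 0$.
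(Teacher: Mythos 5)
Your proof is correct and follows essentially the same route as the paper: expand the inner product, observe that only vertices in $P(x)\cap P(y)=P(\mathrm{LCA}(x,y))$ contribute $\omega(v)$ each, and sum these to recover $w(\mathrm{LCA}(x,y))$. The only difference is that you spell out the telescoping computation $\sum_{v\in P(u)}\omega(v)=w(u)$, which the paper leaves implicit.
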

\begin{proof}
 Given arbitrary $x,y \in \X$ and let $c={\rm LCA}(x,y)$.
 The dot product yields 
 \begin{equation*}
  \phi(x)^\tp \phi(y) = 
  \sum_{v \in V(T)} [\phi(x)]_v [\phi(y)]_v =
  \sum_{v \in P(c)} \sqrt{\omega(v)}^2 = 
  w(c) = k(x,y),
 \end{equation*}
 since according to the definition the only non-zero products contributing to the 
 sum over $v \in V(T)$ are those in $P(x) \cap P(y) = P(c)$. 
\end{proof}

Figure~\ref{fig:example} shows an example of a strong kernel, an associated 
hierarchy and the derived feature vectors.
As a consequence of Theorem~\ref{thm:strong_hierarchy} and 
Proposition~\ref{prop:strong:feature_map}, strong kernels according to Definition~\ref{def:strong} are indeed valid kernels.

\begin{figure}
  \centering
  \null\hfill
  \subfigure[Kernel matrix]{\label{fig:example:kernel_matrix}
    \includegraphics{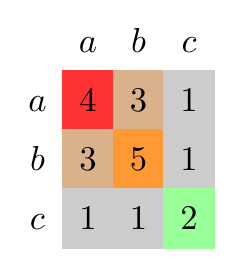}
  }\hfill
  \subfigure[Hierarchy]{\label{fig:example:hierarchy}
    \includegraphics[scale=.9]{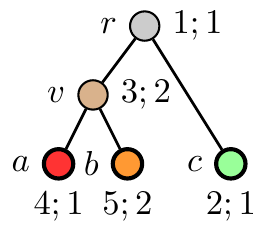}
  }\hfill
  \subfigure[Feature vectors]{\label{fig:example:vectors}
    \includegraphics[scale=.9]{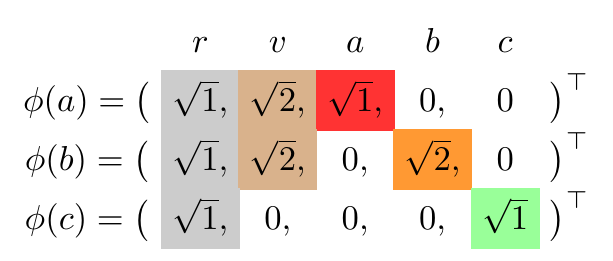}
  }
  \hfill\null
  \caption{The matrix of a strong kernel on three 
    objects~\subref{fig:example:kernel_matrix} induced by the
    hierarchy~\subref{fig:example:hierarchy} and the derived feature 
    vectors~\subref{fig:example:vectors}. A vertex $u$ 
    in~\subref{fig:example:hierarchy} is annotated by its weights 
    $w(u);\omega(u)$.
  }
  \label{fig:example}
\end{figure}

\section{Valid kernels from optimal assignments}
We consider the function $K_\Assign^k$ on $[\X]^n$ according to Eq.~\eqref{eq:assignment_kernel} 
under the assumption that the base kernel $k$ is strong.
Let $(T,w)$ be a hierarchy on \X which induces $k$.
For a vertex $v \in V(T)$ and a set $X \subseteq \X$, we denote by $X_v$ the 
subset of $X$ that is contained in the subtree rooted at $v$.
We define the histogram $H^k$ of a set $X \in [\X]^n$ w.r.t.\ the strong base 
kernel $k$ as $H^k(X) = \sum_{x \in X} \phi(x) \circ \phi(x)$, where $\phi$ is 
the feature map of the strong base kernel according to 
Section~\ref{sec:strong:feature_maps} and $\circ$ denotes the element-wise 
product. Equivalently, $[H^k(X)]_v = \omega(v) \cdot |X_v|$ for $v \in V(T)$.
The \emph{histogram intersection kernel}~\cite{Swain1991} is defined as
$K_\sqcap(\vec{g},\vec{h}) = \sum_{i=1}^t \min\{[\vec{g}]_i,[\vec{h}]_i\}$,
$t \in \bbN$, and known to be a valid kernel on $\bbR^t$~\cite{Barla2003,Boughorbel2005}.

\begin{theorem}\label{thm:strong_hist}
 Let $k$ be a strong kernel on \X and the histograms $H^k$ defined as 
 above, then $K_\Assign^k(X,Y) = K_\sqcap\left(H^k(X),H^k(Y)\right)$ for all $X,Y \in [\X]^n$.
\end{theorem}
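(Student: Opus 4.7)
The plan is to expand each base-kernel value along the hierarchy and reduce $K_\Assign^k$ to a sum indexed by the vertices of $T$. By Proposition~\ref{prop:strong:feature_map} together with the telescoping identity $w(v)=\sum_{u\in P(v)}\omega(u)$, any strong kernel satisfies $k(x,y)=w({\rm LCA}(x,y))=\sum_{v\in P(x)\cap P(y)}\omega(v)$; equivalently, $v$ contributes $\omega(v)$ to $k(x,y)$ exactly when $x\in X_v$ and $y\in Y_v$. For an arbitrary bijection $B\in\Assign(X,Y)$, interchanging summations then yields
\begin{equation*}
W(B) \;=\; \sum_{(x,y)\in B} k(x,y) \;=\; \sum_{v\in V(T)} \omega(v)\cdot p_v(B), \qquad p_v(B) := |\{(x,y)\in B : x\in X_v,\; y\in Y_v\}|.
\end{equation*}

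Because $B$ is a bijection, at most $|X_v|$ of its pairs can have first component in $X_v$ and at most $|Y_v|$ can have second in $Y_v$, so $p_v(B)\leq \min\{|X_v|,|Y_v|\}$. Combined with $[H^k(X)]_v=\omega(v)\cdot|X_v|$ and $\omega\geq 0$, this gives the upper bound $W(B)\leq \sum_v\omega(v)\min\{|X_v|,|Y_v|\} = K_\sqcap(H^k(X),H^k(Y))$ for every $B$, and hence for the maximum $K_\Assign^k(X,Y)$.

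The main step is to exhibit a bijection $B^\star$ that attains $p_v(B^\star)=\min\{|X_v|,|Y_v|\}$ simultaneously at every $v$. I would construct $B^\star$ by a bottom-up recursion on $T$. At a leaf $v$, match the element to itself if $v\in X\cap Y$. At an internal vertex $v$ with children $c_1,\dots,c_m$, first take the disjoint union of the matchings produced recursively in each child subtree, which already packs $m_{c_i}:=\min\{|X_{c_i}|,|Y_{c_i}|\}$ pairs inside $X_{c_i}\times Y_{c_i}$; then pair the leftover elements of the $X_{c_i}$ with the leftover elements of the $Y_{c_j}$ across \emph{distinct} children, in any order. The identities $\sum_i|X_{c_i}|=|X_v|$ and $\sum_i|Y_{c_i}|=|Y_v|$ ensure that exactly $\min\{|X_v|,|Y_v|\}-\sum_i m_{c_i}$ further pairs can be formed this way, bringing the total to $m_v:=\min\{|X_v|,|Y_v|\}$. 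At the root, $|X_r|=|Y_r|=n$, so $B^\star$ is a full bijection between $X$ and $Y$.

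The delicate point, and the main obstacle to make rigorous, is that the new pairs introduced at $v$ do not spoil the bounds already achieved in the subtrees below. This holds because such a pair $(x,y)$ satisfies ${\rm LCA}(x,y)=v$ and therefore contributes to $p_u$ only for ancestors $u\in P(v)$, leaving every descendant count intact. An induction on the height of $T$ then establishes $p_v(B^\star)=\min\{|X_v|,|Y_v|\}$ for all $v$, whence $W(B^\star)=K_\sqcap(H^k(X),H^k(Y))$, matching the upper bound and proving the theorem.
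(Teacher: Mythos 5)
Your proposal is correct and follows essentially the same route as the paper: the same expansion $k(x,y)=\sum_{v\in P(x)\cap P(y)}\omega(v)$ giving $W(B)=\sum_v \omega(v)\,p_v(B)$ with $p_v(B)\le\min\{|X_v|,|Y_v|\}$, and the same greedy bottom-up pairing of not-yet-matched elements to achieve equality at every vertex (your recursive formulation, with the observation that pairs created at $v$ only affect counts at ancestors of $v$, is just a slightly more explicit write-up of the paper's traversal argument).
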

\begin{proof}
 Let $(T,w)$ be a hierarchy inducing the strong base kernel $k$.
 We rewrite the weight of an assignment $B$ as sum of weights of vertices in $T$.
 Since
 \begin{align*}
  k(x,y) = 
  w({\rm LCA}(x,y)) = 
  \sum_{\mathclap{v \in P(x) \cap P(y)}} \omega(v),
  \text{ we have }\enskip 
  W(B) &= \sum_{\mathclap{(x,y) \in B}} k(x,y) = 
 \sum_{\mathclap{v \in V(T)}} c_v \cdot \omega(v),
 \end{align*}
 where $c_v$ counts how often $v$ appears simultaneously in $P(x)$ and $P(y)$
 in total for all $(x,y) \in B$.
 For the histogram intersection kernel we obtain
 \begin{align*}
  K_\sqcap(H^k(X),H^k(Y)) 
  = \sum_{\mathclap{v \in V(T)}} \min\{\omega(v) \cdot |X_v|, \omega(v) \cdot |Y_v|\}
  = \sum_{\mathclap{v \in V(T)}} \min\{|X_v|, |Y_v|\} \cdot \omega(v).
 \end{align*}
 Since every assignment $B \in \Assign(X,Y)$ is a bijection, each $x \in X$ and 
 $y \in Y$ appears only once in $B$ and $c_v \leq \min \{ |X_v|, |Y_v| \}$ follows. 

 It remains to show that the above inequality is tight for an optimal assignment.
 We construct such an assignment by the following greedy approach: We perform a bottom-up
 traversal on the hierarchy starting with the leaves. 
 For every vertex $v$ in the hierarchy we arbitrarily pair the objects in $X_v$ 
 and $Y_v$ that are not yet contained in the assignment. Note that no element
 in $X_v$ has been assigned to an element in $Y \setminus Y_v$, and no element 
 in $Y_v$ to an element from $X \setminus X_v$. Hence, at every vertex $v$ we 
 have $c_v = \min\{|X_v|, |Y_v|\}$ vertices from $X_v$ assigned to vertices in 
 $Y_v$. 
\end{proof}

Figure~\ref{fig:example:hist} illustrates the relation between the optimal 
assignment kernel employing a strong base kernel and the histogram intersection
kernel.
Note that a vertex $v \in V(T)$ with $\omega(v)=0$ does not contribute to the
histogram intersection kernel and can be omitted. 
In particular, for any two objects $x_1,x_2 \in \X$ with $k(x_1,y)=k(x_2,y)$ 
for all $y \in \X$ we have $\omega(x_1) = \omega(x_2) = 0$. There is no need
to explicitly represent such leaves in the hierarchy, yet their multiplicity must
be considered to determine the number of leaves in the subtree rooted at an
inner vertex, cf. Fig.~\ref{fig:example}, \ref{fig:example:hist}.

\begin{corollary}\label{cor:strong_valid}
 If the base kernel $k$ is strong, then the function $K_\Assign^k$ is a valid kernel.
\end{corollary}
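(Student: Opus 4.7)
The plan is to derive Corollary~\ref{cor:strong_valid} as an essentially immediate consequence of Theorem~\ref{thm:strong_hist} together with the known validity of the histogram intersection kernel. First, I would invoke Theorem~\ref{thm:strong_hist} to rewrite
\[
 K_\Assign^k(X,Y) = K_\sqcap\!\left(H^k(X), H^k(Y)\right) \qquad \text{for all } X,Y \in [\X]^n,
\]
where $H^k : [\X]^n \to \bbR^t$ is the histogram map induced by the feature map $\phi$ from Proposition~\ref{prop:strong:feature_map} and $t = |V(T)|$ for a hierarchy $(T,w)$ inducing $k$.

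Next, I would recall that $K_\sqcap$ is a valid (symmetric, p.s.d.) kernel on $\bbR^t$; this is the fact stated (with citations to~\cite{Barla2003,Boughorbel2005}) just before Theorem~\ref{thm:strong_hist}. Then I would apply the standard closure property of kernels under pullback: if $K$ is a valid kernel on a set $\mathcal{Z}$ and $\psi : \mathcal{Y} \to \mathcal{Z}$ is any map, then $(y_1,y_2) \mapsto K(\psi(y_1),\psi(y_2))$ is a valid kernel on $\mathcal{Y}$. This is immediate from the definition via an explicit feature map (compose $\psi$ with the feature map of $K$) or equivalently by noting that the Gram matrix of the pulled-back kernel on any finite sample is a principal rearrangement of a submatrix of a p.s.d.\ matrix for $K$. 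Specializing to $\mathcal{Y} = [\X]^n$, $\mathcal{Z} = \bbR^t$, $\psi = H^k$, and $K = K_\sqcap$, we conclude that $K_\Assign^k$ is a valid kernel on $[\X]^n$.

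Symmetry is clear from the formula, so the only content is positive semidefiniteness. There is no real obstacle: the work has already been done in Theorem~\ref{thm:strong_hist}, which converts the combinatorial $\max$ over bijections into an expression in terms of a fixed pair of vectors $H^k(X), H^k(Y)$. Once that identification is made, validity is inherited from $K_\sqcap$, and the corollary follows in two lines.
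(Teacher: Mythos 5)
Your proof is correct and follows exactly the route the paper intends: the corollary is an immediate consequence of Theorem~\ref{thm:strong_hist} together with the validity of the histogram intersection kernel on $\bbR^t$, pulled back along the histogram map $H^k$. Nothing is missing.
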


Theorem~\ref{thm:strong_hist} implies not only that optimal assignments give
rise to valid kernels for strong base kernels, but also allows to compute 
them by histogram intersection. Provided that the hierarchy is known, bottom-up 
computation of histograms and their intersection can both be performed in linear 
time, while the general Hungarian method would require cubic time to solve the 
assignment problem~\cite{Burkard2012}.

\begin{corollary}
 Given a hierarchy inducing $k$, $K_\Assign^k(X,Y)$ can be computed in time 
 $\mathcal{O}(|X|+|Y|)$.
\end{corollary}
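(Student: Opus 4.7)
The plan is to invoke Theorem~\ref{thm:strong_hist} to write
\[
K_\Assign^k(X,Y) \;=\; \sum_{v \in V(T)} \omega(v) \cdot \min\{|X_v|,|Y_v|\},
\]
and then show this sum is computable without visiting all of $T$, by collapsing the long ``inactive'' chains of ancestors that contribute a constant multiplicity.

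First, I observe that $v$ contributes zero whenever $v$ is not an ancestor of any element of $X \cup Y$, since then $|X_v|=|Y_v|=0$. Moreover, along any maximal path of ancestors of $X \cup Y$ that contains no vertex branching towards two distinct elements of $X \cup Y$, both $|X_v|$ and $|Y_v|$ are constant, and since $\omega(v) = w(v)-w(p(v))$ the $\omega$-weights telescope along the chain. Letting $S$ be the Steiner tree in $T$ spanning $X \cup Y \cup \{r\}$ (where $r$ is the root of $T$), and letting $p_S(\cdot)$ denote the parent of a vertex in $S$, the sum collapses to
\[
K_\Assign^k(X,Y) \;=\; \sum_{v \in V(S)} \bigl(w(v) - w(p_S(v))\bigr)\cdot\min\{|X_v|,|Y_v|\},
\]
with the convention $w(p_S(r)) := 0$ so that the root's contribution matches $\omega(r)=w(r)$. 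Standard Steiner-tree reasoning on rooted trees gives $|V(S)| \leq 2(|X|+|Y|) + 1$, since at most one branching vertex is introduced per pair of consecutive terminals in pre-order.

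Second, I would construct $S$ and evaluate the sum in $\mathcal{O}(|X|+|Y|)$ time. Assuming the given hierarchy has been preprocessed once and for all for DFS pre-order and $\mathcal{O}(1)$-time lowest common ancestor queries -- both standard linear-time preprocessings of a fixed rooted tree -- I would sort $X \cup Y$ by pre-order, then obtain the internal Steiner vertices as LCAs of consecutive pre-order neighbours. A single bottom-up traversal of $S$ then fills in $|X_v|$ and $|Y_v|$, and the displayed sum is accumulated in another $\mathcal{O}(|V(S)|)$ pass.

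The main obstacle is handling the chain-collapsing rigorously: in particular, making sure duplicate LCAs, terminals that coincide with LCAs, and the root's boundary term are all accounted for correctly. Once this bookkeeping is dispatched, all three phases cost $\mathcal{O}(|X|+|Y|)$, and the claim follows.
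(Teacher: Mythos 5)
Your proposal is correct, but it is considerably more elaborate than what the paper does, and it is worth seeing why. The paper's argument is essentially the remark preceding the corollary: by Theorem~\ref{thm:strong_hist} the kernel equals $K_\sqcap(H^k(X),H^k(Y))$, and the (sparse) histograms are filled by a single bottom-up pass that propagates the counts $|X_v|,|Y_v|$ from the leaves $X\cup Y$ towards the root, after which the intersection is a sum over the populated entries; the paper implicitly treats the populated part of the hierarchy as having size $\mathcal{O}(|X|+|Y|)$, justified by its earlier remarks that vertices with $\omega(v)=0$ and unary chains can be contracted away. Your Steiner-tree compression makes precisely this point rigorous for hierarchies whose depth is \emph{not} bounded: the telescoping of $\omega$ along chains on which $|X_v|$ and $|Y_v|$ are constant is exactly right, the virtual-tree size bound $|V(S)|=\mathcal{O}(|X|+|Y|)$ is standard, and the root boundary convention is handled consistently. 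So your argument buys a bound that genuinely does not depend on the depth of $T$, at the price of a one-time $\mathcal{O}(|V(T)|)$ preprocessing for pre-order numbers and constant-time LCA, which is a reasonable reading of ``given a hierarchy.''

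One small honest caveat you should flag: ordering $X\cup Y$ by pre-order number with a comparison sort costs $\Theta(n\log n)$, not $\mathcal{O}(n)$, and bucketing over pre-order indices reintroduces a $|V(T)|$ term per query. Either state the bound as $\mathcal{O}((|X|+|Y|)\log(|X|+|Y|))$ for the fully general deep-hierarchy case, or note (as the paper effectively does) that in the intended applications the contracted hierarchy restricted to the ancestors of $X\cup Y$ already has $\mathcal{O}(|X|+|Y|)$ vertices --- e.g.\ for WL-OA the depth is the constant $h+1$ --- so the plain bottom-up histogram computation suffices and no sorting or LCA machinery is needed.
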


\begin{figure}
  \centering
  \null\hfill
  \subfigure[Assignment problem]{\label{fig:example:assignment_problem}
    \includegraphics{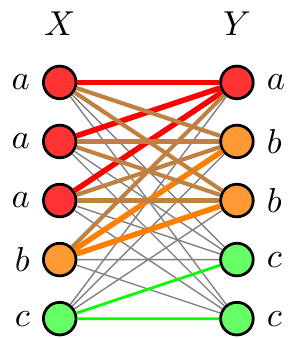}
  }\hfill
  \subfigure[Histograms]{\label{fig:example:histograms}%
    \includegraphics{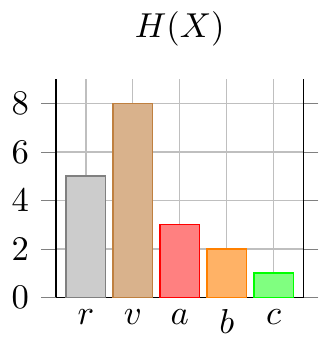}%
    \includegraphics{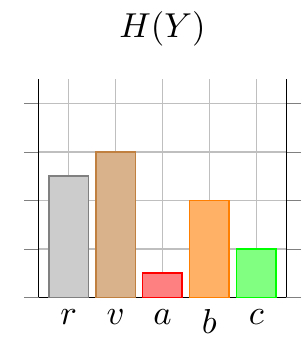}%
  }
  \hfill\null
  \caption{
    An assignment instance~\subref{fig:example:assignment_problem} for 
    $X,Y \in [\X]^5$ and the derived histograms~\subref{fig:example:histograms}.
    The set $X$ contains three distinct vertices labelled $a$ and the set $Y$ two
    distinct vertices labelled $b$ and $c$. Taking the multiplicities into account 
    the histograms are obtained from the hierarchy of the base kernel $k$ depicted in Fig.~\ref{fig:example}.
    The optimal assignment yields a value of $K_\Assign^k(X,Y)=15$, where grey,
    green, brown, red and orange edges have weight $1$, $2$, $3$, $4$ and $5$, 
    respectively.
    The histogram intersection kernel gives
    $K_\sqcap(H^k(X),H^k(Y))= \min\{5,5\}+\min\{8,6\}+\min\{3,1\}+\min\{2,4\}+\min\{1,2\}=15$.       
  }
  \label{fig:example:hist}
\end{figure}

\section{Graph kernels from optimal assignments}
The concept of optimal assignment kernels is rather general and can be applied
to derive kernels on various structures.
In this section we apply our results to obtain novel graph kernels, i.e., kernels
of the form $K : \G \times \G \to \bbR$, where \G denotes the set of graphs.
We assume that every vertex $v$ is equipped with a categorical label given by $\lab(v)$. 
Labels typically arise from applications, e.g., in a graph representing a chemical 
compound the labels may indicate atom types.

\subsection{Optimal assignment kernels on vertices and edges} \label{sec:baseline}

As a baseline we propose graph kernels on vertices and edges.
The \emph{vertex optimal assignment kernel} (V-OA) is defined as 
$K(G,H) = K_\Assign^k(V(G), V(H))$, where $k$ is the Dirac kernel on vertex 
labels.
Analogously, the \emph{edge optimal assignment kernel} (E-OA) is given by
$K(G,H) = K_\Assign^k(E(G), E(H))$, where we define $k(uv, st) = 1$ if at least
one of the mappings $(u \mapsto s, v \mapsto t)$ and $(u \mapsto t, v \mapsto s)$
maps vertices with the same label only; and $0$ otherwise.
Since these base kernels are Dirac kernels, they are strong and, consequently, 
V-OA and E-OA are valid kernels.

\subsection{Weisfeiler-Lehman optimal assignment kernels} \label{sec:wl}

\emph{Weisfeiler-Lehman kernels} are based on iterative vertex colour refinement 
and have been shown to provide state-of-the-art prediction performance in 
experimental evaluations~\cite{Shervashidze2011}.
These kernels employ the classical $1$-dimensional Weisfeiler-Lehman heuristic 
for graph isomorphism testing and consider subtree patterns encoding the
neighbourhood of each vertex up to a given distance. 
For a parameter $h$ and a graph $G$ with initial labels $\lab$, a sequence 
$(\lab_0,\dots,\lab_h)$ of refined labels referred to as \emph{colours} is computed,
where $\lab_0 = \lab$ and $\lab_i$ is obtained from $\lab_{i-1}$ by the following 
procedure:
Sort the multiset of colours $\{\lab_{i-1}(u) :\, vu \in E(G)\}$ for every 
vertex $v$ lexicographically to obtain a unique sequence of colours and add 
$\lab_{i-1}(v)$ as first element. Assign a new colour $\lab_i(v)$ to every vertex 
$v$ by employing a one-to-one mapping from sequences to new colours. 
Figure~\ref{fig:wl:graph} illustrates the refinement process.
\begin{figure}
  \centering
  \null\hfill
  \subfigure[Graph $G$ with refined colours]{\label{fig:wl:graph}
    \includegraphics{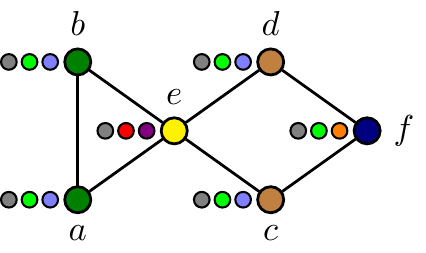}
  }\hfill
  \subfigure[Feature vector]{\label{fig:wl:colo_count}%
    \includegraphics{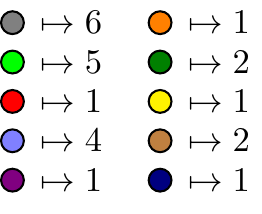}
  }\hfill
  \subfigure[Associated hierarchy]{\label{fig:wl:hierarchy}%
    \includegraphics{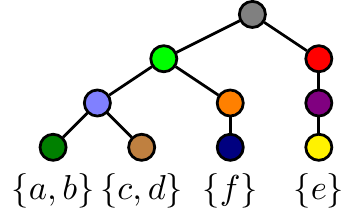}
  }
  \hfill\null
  \caption{
    A graph $G$ with uniform initial colours $\lab_0$ and refined colours 
    $\lab_i$ for $i \in \{1,\dots,3\}$~\subref{fig:wl:graph}, 
    the feature vector of $G$ for the Weisfeiler-Lehman subtree kernel~\subref{fig:wl:colo_count}
    and the associated hierarchy~\subref{fig:wl:hierarchy}.
    Note that the vertices of $G$ are the leaves of the hierarchy, although
    not shown explicitly in Fig.~\ref{fig:wl:hierarchy}.
  }
  \label{fig:wl}
\end{figure}
The \emph{Weisfeiler-Lehman subtree kernel} (WL) 
counts the vertex colours two graphs have in common in the first $h$ refinement 
steps and can be computed by taking the dot product of feature vectors, where 
each component counts the occurrences of a colour, see Fig.~\ref{fig:wl:colo_count}.

We propose the \emph{Weisfeiler-Lehman optimal assignment kernel} (WL-OA),
which is defined on the vertices like OA-V, but employs the non-trivial base kernel 
\begin{equation}\label{eq:strong:wl}
  k(u,v) = \sum_{i=0}^h k_\delta(\lab_i(u), \lab_i(v)).
\end{equation}
This base kernel corresponds to the number of matching colours in the refinement 
sequence.
More intuitively, the base kernel value reflects to what extent the two vertices 
have a similar neighbourhood.

Let \V be the set of all vertices of graphs in \G, we show that the refinement 
process defines a hierarchy on \V, which induces the base kernel of 
Eq.~\eqref{eq:strong:wl}.
Each vertex colouring $\lab_i$ naturally partitions \V into colour classes, 
i.e., sets of vertices with the same colour.
Since the refinement takes the colour $\lab_i(v)$ of a vertex $v$ into account 
when computing $\lab_{i+1}(v)$, the implication 
$\lab_i(u) \neq \lab_i(v) \Rightarrow \lab_{i+1}(u) \neq \lab_{i+1}(v)$ holds
for all $u,v \in \V$. 
Hence, the colour classes induced by $\lab_{i+1}$ are at least as fine as those 
induced by $\lab_{i}$. Moreover, the sequence $(\lab_i)_{0 \leq i \leq h}$ gives 
rise to a family of nested subsets, which can naturally be represented by a
hierarchy $(T,w)$, see Fig.~\ref{fig:wl:hierarchy} for an illustration.
When assuming $\omega(v)=1$ for all vertices $v \in V(T)$, the hierarchy induces
the kernel of Eq.~\eqref{eq:strong:wl}. 
We have shown that the base kernel is strong and it follows from 
Corollary~\ref{cor:strong_valid} that WL-OA is a valid kernel. Moreover,
it can be computed from the feature vectors of the Weisfeiler-Lehman subtree 
kernel in linear time by histogram intersection, cf. Theorem~\ref{thm:strong_hist}.

\section{Experimental evaluation}
We report on the experimental evaluation of the proposed graph kernels derived 
from optimal assignments and compare with state-of-the-art convolution kernels.

\subsection{Method and Experimental Setup}
We performed classification experiments using the $C$-SVM implementation 
LIBSVM~\cite{Chang2011}.
We report mean prediction accuracies and standard deviations obtained by 
$10$-fold cross-validation repeated $10$ times with random fold assignment.
Within each fold all necessary parameters were selected by cross-validation 
based on the training set. This includes the regularization parameter $C$, kernel
parameters where applicable and whether to normalize the kernel matrix.
All kernels were implemented in Java and experiments were conducted using 
Oracle Java v1.8.0 on an Intel Core i7-3770 CPU at 3.4GHz (Turbo Boost disabled)
with 16GB of RAM using a single processor only.

\paragraph{Kernels.}
As a baseline we implemented the \emph{vertex kernel} (V) and \emph{edge 
kernel} (E), which are the dot products on vertex and edge label 
histograms, respectively, where an edge label consist of the labels of its
endpoints. V-OA and E-OA are the related optimal assignment kernels as 
described in Sec.~\ref{sec:baseline}. For the Weisfeiler-Lehman kernels WL and 
WL-OA, see Section~\ref{sec:wl}, the parameter $h$ was chosen from $\{0,...,7\}$.
In addition we implemented a \emph{graphlet kernel} (GL) and the shortest-path 
kernel (SP)~\cite{Borgwardt2005}.
GL is based on connected subgraphs with three vertices taking labels into account 
similar to the approach used in~\cite{Shervashidze2011}. For SP we used the Dirac 
kernel to compare path lengths and computed the kernel by explicit feature maps, cf.~\cite{Shervashidze2011}. 
Note that all kernels not identified as optimal assignment kernels by the suffix 
OA are convolution kernels.

\paragraph{Data sets.}
We tested on widely-used graph classification benchmarks from different domains, 
cf. \cite{Borgwardt2005a,Vishwanathan2010,Shervashidze2011,Yanardag2015a}: 
\textsc{Mutag}, \textsc{PTC-MR}, \textsc{NCI1} and \textsc{NCI109} are graphs 
derived from small molecules, \textsc{Proteins}, \textsc{D\&D} and \textsc{Enzymes} 
represent macromolecules, and \textsc{Collab} and \textsc{Reddit} are derived 
from social networks.\footnote{The data sets, further references and statistics 
are available from \url{http://graphkernels.cs.tu-dortmund.de}.}
All data sets have two class labels except \textsc{Enzymes} and \textsc{Collab}, 
which are divided into six and three classes, respectively.
The social network graphs are unlabelled and we considered all vertices uniformly 
labelled. All other graph data sets come with vertex labels.
Edge labels, if present, were ignored since they are not supported by all graph 
kernels under comparison.

\subsection{Results and discussion}
Table~\ref{tab:accuracies} summarizes the classification accuracies. We observe
that optimal assignment kernels on most data sets improve over the prediction
accuracy obtained by their convolution-based counterpart. The only distinct 
exception is \textsc{Mutag}.
The extent of improvement on the other data sets varies, but is in particular
remarkable for \textsc{Enzymes} and \textsc{Reddit}.
This indicates that optimal assignment kernels provide a more valid notion
of similarity than convolution kernels for these classification tasks.
The most successful kernel is WL-OA, which almost consistently 
improves over WL and performs best on seven of the nine data sets. 
WL-OA provides the second best accuracy on \textsc{D\&D} and ranks in the middle 
of the field for \textsc{Mutag}. For these two data set the difference in 
accuracy between the kernels is small and even the baseline kernels perform 
notably well.

The time to compute the quadratic kernel matrix was less that one minute for all 
kernels and data sets with exception of SP on \textsc{D\&D} (29\,min) and 
\textsc{Reddit} (2\,h) as well as GL on \textsc{Collab} (28\,min).
The running time to compute the optimal assignment kernels by histogram 
intersection was consistently on par with the running time required for the 
related convolution kernels and orders of magnitude faster than their 
computation by the Hungarian method. 

\setlength\tabcolsep{3.7pt}
\newcommand{\win}[1]{$\hspace{-0.3mm}$\textbf{#1}}
\newcommand{\sd}[1]{\scriptsize{$\pm$#1}}
\begin{table*}\small
\begin{center}
  \caption{Classification accuracies and standard deviations on graph data sets 
  representing small molecules, macromolecules and social networks.}
  \label{tab:accuracies}
  \begin{tabular}{@{}lccccccccc@{}}\toprule
    \multirow{3}{*}{\textbf{Kernel}}    &\multicolumn{9}{c}{\textbf{Data Set}}\\\cmidrule{2-10}
                     & \textsc{Mutag}   & \textsc{PTC-MR}  & \textsc{NCI1}    & \textsc{NCI109}  & \textsc{Proteins} & \textsc{D\&D}    & \textsc{Enzymes}  & \textsc{Collab}  & \textsc{Reddit}  \\\midrule
    \textsc{V}       &  85.4\sd{0.7}    &  57.8\sd{0.9}    &  64.6\sd{0.1}    &  63.6\sd{0.2}    &  71.9\sd{0.4}     &  78.2\sd{0.4}    &  23.4\sd{1.1}     & 56.2\sd{0.0}     & 75.3\sd{0.1}     \\
    \textsc{V-OA}    &  82.5\sd{1.1}    &  56.4\sd{1.8}    &  65.6\sd{0.3}    &  65.1\sd{0.4}    &  73.8\sd{0.5}     &  78.8\sd{0.3}    &  35.1\sd{1.1}     & 59.3\sd{0.1}     & 77.8\sd{0.1}     \\\midrule
    \textsc{E}       &  85.2\sd{0.6}    &  57.3\sd{0.7}    &  66.2\sd{0.1}    &  64.9\sd{0.1}    &  73.5\sd{0.2}     &  78.3\sd{0.5}    &  27.4\sd{0.8}     & 52.0\sd{0.0}     & 75.1\sd{0.1}     \\
    \textsc{E-OA}    &  81.0\sd{1.1}    &  56.3\sd{1.7}    &  68.9\sd{0.3}    &  68.7\sd{0.2}    &  74.5\sd{0.6}     &  79.0\sd{0.4}    &  37.4\sd{1.8}     & 68.2\sd{0.3}     & 79.8\sd{0.2}     \\\midrule 
    \textsc{WL}      &\win{86.0}\sd{1.7}&  61.3\sd{1.4}    &  85.8\sd{0.2}    &  85.9\sd{0.3}    &  75.6\sd{0.4}     &  79.0\sd{0.4}    &  53.7\sd{1.4}     & 79.1\sd{0.1}     & 80.8\sd{0.4}     \\
    \textsc{WL-OA}   &  84.5\sd{1.7}    &\win{63.6}\sd{1.5}&\win{86.1}\sd{0.2}&\win{86.3}\sd{0.2}&\win{76.4}\sd{0.4} &  79.2\sd{0.4}    &\win{59.9}\sd{1.1} &\win{80.7}\sd{0.1}&\win{89.3}\sd{0.3}\\\midrule
    \textsc{GL}      &  85.2\sd{0.9}    &  54.7\sd{2.0}    &  70.5\sd{0.2}    &  69.3\sd{0.2}    &  72.7\sd{0.6}     &\win{79.7}\sd{0.7}&  30.6\sd{1.2}     & 64.7\sd{0.1}     & 60.1\sd{0.2}     \\
    \textsc{SP}      &  83.0\sd{1.4}    &  58.9\sd{2.2}    &  74.5\sd{0.3}    &  73.0\sd{0.3}    &  75.8\sd{0.5}     &  79.0\sd{0.6}    &  42.6\sd{1.6}     & 58.8\sd{0.2}     & 84.6\sd{0.2}     \\
    \bottomrule
  \end{tabular}
\end{center}
\end{table*}

\section{Conclusions and future work}
We have characterized the class of strong kernels leading to valid optimal 
assignment kernels and derived novel effective kernels for graphs. 
The reduction to histogram intersection makes efficient computation possible and
known speed-up techniques for intersection kernels can directly be 
applied (see, e.g., \cite{Vedaldi2012} and references therein).
We believe that our results may form the basis for the design of new kernels, 
which can be computed efficiently and adequately measure similarity.

\begin{small}
\subsubsection*{Acknowledgments}

N.~M.~Kriege is supported by the German Science Foundation (DFG) within the Collaborative Research Center SFB 876 ``Providing Information by 
Resource-Constrained Data Analysis'', project A6 ``Resource-efficient Graph Mining''.
P.-L.~Giscard is grateful for the financial support provided by the Royal Commission for the Exhibition of 1851.

\bibliographystyle{abbrv}
\bibliography{lit_short}
\end{small}

\end{document}